\begin{document}
%
\title{Multiple Learning for Regression in Big Data}

\author{\IEEEauthorblockN{Xiang Liu\IEEEauthorrefmark{1}}
\IEEEauthorblockA{Purdue University\\
xiang35@purdue.edu}
\and
\IEEEauthorblockN{Ziyang Tang\IEEEauthorrefmark{1}}
\IEEEauthorblockA{Purdue University\\
tang385@purdue.edu}
\and
\IEEEauthorblockN{Huyunting Huang}
\IEEEauthorblockA{Purdue University\\
huan1182@purdue.edu}
\and
\IEEEauthorblockN{Tonglin Zhang}
\IEEEauthorblockA{Purdue University\\
tlzhang@purdue.edu}
\and
\IEEEauthorblockN{Baijian Yang}
\IEEEauthorblockA{Purdue University\\
byang@purdue.edu}
\thanks{* indicates equal contribution}
}


%
\maketitle

\begin{abstract}
Regression problems that have closed-form solutions are well understood and can be easily implemented when the dataset is small enough to be all loaded into the RAM. Challenges arise when data are too big to be stored in RAM to compute the closed form solutions.
Many techniques were proposed to overcome or alleviate the memory barrier problem but the solutions are often local optima. In addition, most approaches require loading the raw data to the memory again when updating the models. Parallel computing clusters are often expected in practice if multiple models need to be computed and compared. 
We propose multiple learning approaches that utilize an array of sufficient statistics (SS) to address the aforementioned big data challenges. The memory oblivious approaches break the memory barrier when computing regressions with closed-form solutions, including but not limited to linear regression, weighted linear regression, linear regression with Box-Cox transformation (Box-Cox regression) and ridge regression models. 
The computation and update of the SS arrays can be handled at per row level or per mini-batch level. And updating a model is as easy as matrix addition and subtraction. Furthermore, the proposed approaches also enable the computational parallelizability of multiple models because multiple SS arrays for different models can be computed simultaneously with a single pass of slow disk I/O access to the  dataset.
We implemented our approaches on Spark and evaluated over the simulated datasets. Results showed our approaches can achieve exact solutions of multiple models. The training time saved compared to the traditional methods is proportional to the number of models need to be investigated.
\end{abstract}

\begin{IEEEkeywords}
Big Data; Linear Regression; Weighted Linear Regression; Ridge Regression; Box-Cox Transformation
\end{IEEEkeywords}
%
\IEEEpeerreviewmaketitle
\renewcommand{\figurename}{Fig.}
\newcommand\figref{Fig. \ref}
\newcommand\tbref{Table \ref}
\newcommand\algref{\textbf{Algorithm} \ref}

\newtheorem{definition}{Definition}
\newcommand\defref{\textbf{Definitoin} \ref}
\newtheorem{theorem}{Theorem}
\newtheorem{corollary}{Corollary}[theorem]
\newtheorem{lemma}[theorem]{Lemma}

\newcommand\secref{Section \ref}
\newcommand{\norm}[1]{\left\lVert#1\right\rVert}

\newcommand{\pluseq}{\mathrel{+}=}
\section{Introduction}\label{sec:intro}
Linear regression, weighted linear regression, linear regression with Box-Cox transformation (Box-Cox regression) and ridge regression have powered the society in many respects by modeling the relationship between a scalar response variable and explanatory variable(s). From housing price prediction to stock price prediction, and from face recognition to marketing analysis, the related applications span a wide spectrum~\cite{naseem2010linear,altay2005stock,nghiep2001predicting}.
After entering the big data era, these regression models are still prevalent in academia and industry. Even though more advanced models, such as XGBoost and deep learning, have seen significant successes lately, the regression models continue their impact in many fields due to their transparency, reliability and explainability~\cite{chen2016xgboost,he2016deep}. 
However, it is not easy to compute these models if the dataset is massive. Closed-form solutions would be impossible if the physical memory cannot hold all the data or the intermediate results needed for the computation. And trade-offs must be made between the accuracy and the time if the iterative methods should to be applied.
Hence, it is of high value to propose a set of big-data oriented approaches that can preserve the benefits of linear, weighted linear, Box-Cox and ridge regression.

For linear regression, academia and industry resort to two major techniques, ordinary least squares (OLS) and the iterative methods. The OLS method is designed to calculate the closed-form solution~\cite{kenney1962linear}. By solving the normal equation, OLS can immediately derive the solution from the data. The normal equation consists of $(\mathbf{X}^\top \mathbf{X})^{-1}$, if $\mathbf{X}^\top \mathbf{X}$ is singular, the normal equation will become unsolvable. One solution is to use generalized inverse~\cite{dresden1920,barata2012moore,ben2003generalized}. Although OLS is efficient time-wise in deriving the closed-form solution, it also introduces the memory barrier issue in that the RAM needs to be big enough to store the entire dataset to solve the equation. To overcome the memory barrier, the distributed matrix could be applied to perform the calculation as a remedy~\cite{moler1986matrix}. But the time cost makes this algorithm infeasible nevertheless. Due to this reason, the applications of this technique are limited. And another technique, the iterative methods, which include gradient descent, Newton's method and Quasi-Newton's method, are commonly used to provide approximate solutions. \cite{kiwiel2001convergence,wedderburn1974quasi}. 
 
Gradient descent, also known as steepest descent, targets to find the minimum of a function. It approaches the minimum by taking steps along the negative gradient of the function with a learning rate proportional to the gradient. It is more universal than OLS as the variations, such as mini-batch gradient descent and stochastic gradient descent, overcome the memory barrier issue by performing a calculation in small batches instead of feeding all the data into memory at once~\cite{ruder2016overview}. But, gradient descent oscillates around the minimum region when the algorithm gets close to the minimum. And its asymptotic rate of convergence is inferior to many other iterative methods. If an easier approach to the minimum or higher asymptotic rate of convergence is demanded, Newton's method is an alternative.
 
Newton's method is a root-finding algorithm, utilizing the Taylor series. To find a minimum/maximum, it needs the knowledge of the second derivative. Unlike gradient descent, this strategy enables Newton's method to approach the extrema/optima more easily rather than oscillations. Besides, it has been proven that Newton's method has the quadratic asymptotic rate of convergence. However, this algorithm is faster than gradient descent only if the Hessian matrix is known or easy to compute~\cite{wedderburn1974quasi}. Unfortunately, the expressions of the second derivatives for large scale optimization problem are often complicated and intractable.
 
Quasi-newton methods, for instance, DFP, BFGS and L-BFGS, were proposed as alternatives to Newton's method when the Hessian matrix is unavailable or too expensive to calculate~\cite{davidon1991variable,avriel2003nonlinear,malouf2002comparison}. Instead of inverting the Hessian matrix in Newton's method, quasi-newton methods build up an approximation for the inverse matrix to reduce the computational load. With this mechanism, quasi-newton methods are usually faster than Newton's method for large datasets. In linear regression, L-BFGS, a variation of BFGS, is one of the most widely used quasi-newton method~\cite{zaharia2010spark}. Generally, L-BFGS outperforms gradient descent in linear regression. 

For the aforementioned approaches, the majority of them require multiple pass through the dataset. Donald Knuth proposed an efficient solution which requires only single-pass through the dataset, however, this approach is only applicable for variance computation~\cite{knuth2014art}.
 
Weighted linear regression is a more generalized version of linear regression by quantifying the importance of different observations~\cite{myers1990classical}. A weighted version of OLS is designed to obtain the corresponding closed-form solution. The iterative methods with slight modifications are also applicable to weighted linear regression~\cite{holland1977robust}.

For Box-Cox regression, it is linear regression with the response variable changed by Box-Cox transformation~\cite{box1964analysis, sakia1992box}. The design philosophy of Box-Cox regression is to handle non-linearity between the response variable and explanatory variables by casting power transformation on the response variable. Naturally, approaches for linear regression are applicable to Box-Cox regression.
 
As linear regression is deficient in handling highly-correlated data, ridge regression is then proposed~\cite{hoerl1970ridge}. The basic idea of ridge regression is to add a $\ell_2$ penalty term to the error sum of squares (SSE) cost function of linear regression~\cite{hoerl1970ridge,marquaridt1970generalized}. A constrained version of OLS can solve this problem, producing similar closed-form solution. The only difference is that the $(\mathbf{X}^\top \mathbf{X})^{-1}$ component from OLS is substituted by $(\mathbf{X}^\top \mathbf{X} + \lambda \mathbf{I})^{-1}$, where $\lambda$ is the coefficient of $\ell_2$ penalty, and $\mathbf{I}$ is the identity matrix. By means of $\lambda \mathbf{I}$, the constrained OLS no longer has to deal with the singularity issue but the memory barrier issue from OLS remains. Gradient descent, Newton's method and quasi-newton methods as well can be applied~\cite{kiwiel2001convergence,wedderburn1974quasi,dennis1977quasi}. 

From the above discussions, it can be concluded that research gaps remain in the following two perspectives:
(i) OLS and its extended versions are difficult in handling the memory barrier issue; and
(ii) The iterative methods are time inefficient and require many iterations to well-train regression models. 
In addition, parameter tuning is inevitable under most conditions. It may probably take several days or even weeks for large scale projects to accomplish the desired performance goals of models. For Box-Cox regression or ridge regression, the situation gets worse as a set of power or ridge parameters are usually applied to pick the best one, which, of course, also multiply the time cost~\cite{pedregosa2011scikit}.

In order to integrate the pros of OLS based approaches that use closed-form solutions to produce the exact results and the iterative methods that overcome the memory barrier, we propose multiple learning approaches that utilize sufficient statistics (SS). The main contributions of our algorithms are summarized as below:
\begin{itemize}
    \item We introduced a SS array which can be computed at per row or per mini-batch level for calculating closed-form solutions. 
    \item Once the closed-form solutions are obtained, the optimums are found, i.e., the prediction performance is at least as good as OLS.
    \item With SS, the datasets stored in the large secondary storage, such as HDD or SSD, needs to be loaded to the primary storage one time only. The time efficiency is therefore greatly improved in contrast to the iterative methods that require multiple slow disk I/Os.
    \item Because multiple SS arrays for different models can be computed simultaneously, multiple models can be computed and updated with a single pass of the entire dataset with one iteration of slow disk I/Os.
\end{itemize}

\section{Background Concepts} \label{sec:bg}

For regression analysis, not only the estimators of the regression model coefficients $\hat{\bm{\beta}}$ are required, but also the estimators of variance $\sigma^2$ and the variance-covariance matrices $\hat{\rm{V}}(\hat{\bm{\beta}})$ should be computed for significant test.
For the ease of presentation, necessary notions and notations closely relevant to linear regression, weighted linear regression, Box-Cox regression and ridge regression are explained below.

\subsection{Linear Regression} 
Assume the dataset contains $n$ observations each of which has $p-1$ features. Consider a linear regression model
\begin{align} \label{eq:linreg}
    \bm{y}=\mathbf{X}\bm\beta + \bm{\varepsilon}
\end{align}
where $ \bm{y}=(y_1,y_2,\dots,y_n)^\top $ is a $n \times 1$ vector of the response variables, $ \mathbf{X}=(\bm{x}_1^\top, \bm{x}_2^\top,\dots,\bm{x}_n^\top)^\top $ is a $n \times p$ matrix of explanatory variables, $\bm{\beta} = (\beta_0,\beta_1,\dots, \beta_{p-1})^\top$ is a $p \times 1$ vector of regression coefficient parameters, and $\bm{\varepsilon} = (\varepsilon_1,\dots,\varepsilon_n)^\top $ is the error term which is a $n \times 1$ vector following the normal distribution $\mathcal{N}(0,\sigma^2\mathbf{I})$.

Linear regression is usually solved by maximizing loglikelihood function \eqref{eq:linreg_llh}.
\begin{align} \label{eq:linreg_llh}
\begin{split}
	\mathcal{L}_{lr}(\bm{\beta}, \sigma^2) =  -\frac{n}{2}\log(2\pi\sigma^2)
    -\frac{1}{2\sigma^2} \norm{\mathbf{y}-\mathbf{X}\bm\beta}_2^2
\end{split}
\end{align}
where $\norm{\cdot}_2$ is an $\ell_2$ norm.


The estimators of model coefficients, variance and variance-covariance matrix are shown in \eqref{eq:linreg_estimators}.
\begin{align}
\begin{split} \label{eq:linreg_estimators}
    & \hat{\bm{\beta}} = (\mathbf{X}^\top \mathbf{X})^{-1} \mathbf{X}^\top \mathbf{y} \\
    & \hat{\sigma}^2 =\frac{1}{n}(\mathbf{y}-\mathbf{X}\hat{\bm{\beta}})^\top (\mathbf{y}-\mathbf{X}\hat{\bm{\beta}}) \\
    & \hat{\rm{V}}(\hat{\bm{\beta}}) =
	\hat{\sigma}^2 (\mathbf{X}^\top\mathbf{X})^{-1}
\end{split}
\end{align}

Note that $\mathbf{x}_1^\top, \mathbf{x}_2^\top,\dots,\mathbf{x}_n^\top$ are all known observations. This means, the value of $\mathbf{x}_i^j$ can be easily computed and included as an explanatory variable in equation \eqref{eq:linreg}. As a result, this approach can also be used to fit polynomial regressions models, in addition to linear regression models. 

\subsection{Weighted Linear Regression} 
The weighted linear regression is similar to linear regression, except it assumes all the off-diagonal entries of the correlation matrix of the residuals are $0$. By means of minimizing the corresponding SSE cost function in \eqref{eq:wlreg_sse}, the estimators of the model coefficients, variance and variance-covariance matrix are shown in \eqref{eq:wlreg_estimators}.
\begin{align} \label{eq:wlreg_sse}
    SSE_{wlr}(\bm{\beta}_w) = \norm{\mathbf{W}^{1/2}(\mathbf{y}-\mathbf{X}\bm{\beta}_w)}_2^2 
\end{align}
where $\mathbf{W}$ is a diagonal matrix of weights.
\begin{align}
\begin{split} \label{eq:wlreg_estimators}
    & \hat{\bm{\beta}}_w = (\mathbf{X}^\top \mathbf{WX})^{-1} \mathbf{X}^\top \mathbf{Wy} \\
    & \hat{\sigma}_w^2 =\frac{1}{n} (\mathbf{y}-\mathbf{X}\hat{\bm{\beta}}_w)^\top \mathbf{W} (\mathbf{y}-\mathbf{X}\hat{\bm{\beta}}_w) \\
    & \hat{\rm{V}}(\hat{\bm{\beta}}_w) =\hat{\sigma}_w^2 (\mathbf{X}^\top\mathbf{WX})^{-1}
\end{split}
\end{align}

\subsection{Box-Cox Regression} 
Box-Cox regression model is a linear regression model with an additional power transformation on the response variable, as shown in~\eqref{eq:bcreg}.
\begin{align} \label{eq:bcreg}
    \mathbf{y}^{(c)}=\mathbf{X}\bm{\beta}_c + \bm{\varepsilon}
\end{align}
where $\mathbf{y}^{(c)}$ is the element-wise power transformation defined in \eqref{eq:bcreg_pp}.
\begin{align} \label{eq:bcreg_pp}
    \mathbf{y}^{(c)}=
    \begin{cases}
        (\mathbf{y}^c-1)/c \text{ if $c \neq 0$ } \\
        \log \mathbf{y} \text{ if $c=0$ }
    \end{cases}
\end{align}

Normally, a set $C$ of power parameters are applied to the response variable. In this case, for every $c \in C$, the one maximizes the profile loglikelihood \eqref{eq:bcreg_llh} is chosen as the best power parameter.

\begin{align} \label{eq:bcreg_llh}
\begin{split}
    & \mathcal{L}_{bc} (c,\bm{\beta}_c,\sigma_c^2) = 
    -\frac{n}{2}\log(2\pi)-\frac{n}{2}\log\sigma_c^2 \\
    & -\frac{1}{2\sigma_c^2} (\mathbf{y}^{(c)}-\mathbf{X}\bm{\beta}_c)^\top (\mathbf{y}^{(c)}-\mathbf{X}\bm{\beta}_c) - (\mathbf{c-1})^\top \log \mathbf{y}
\end{split}
\end{align}

The estimator of the model coefficients, variance and variance-covariance matrix for Box-Cox regression are
\begin{align} \label{eq:bcreg_estimators}
\begin{split}
    & \hat{\bm{\beta}}_c = (\mathbf{X}^\top \mathbf{X})^{-1} \mathbf{X}^\top \mathbf{y}^{(c)} \\
    & \hat{\sigma}_c^2 = \frac{1}{n} 
    (\mathbf{y}^{(c)}-\mathbf{X}\hat{\bm{\beta}}_c)^\top (\mathbf{y}^{(c)}-\mathbf{X}\hat{\bm{\beta}}_c) \\
    & \hat{\rm{V}}(\hat{\bm{\beta}_c}) =
	\hat{\sigma}_c^2 (\mathbf{X}^\top\mathbf{X})^{-1}
\end{split}
\end{align}

\subsection{Ridge Regression}
Ridge regression is linear regression with an $\ell_2$ penalty term added. The corresponding SSE cost function is:
\begin{align} \label{eq:ridge_sse}
    SSE_{ridge}(\lambda,\bm{\beta}_\lambda) =
    \norm{\mathbf{y}-\mathbf{X}\bm{\beta}_\lambda}_2^2
    + n\lambda \norm{\bm{\beta}_\lambda}_2^2
\end{align}
where  $\lambda$ is a non-negative tuning parameter used to control the penalty magnitude. For any $\lambda \geq 0$, \eqref{eq:ridge_sse} can be analytically minimized, yielding the estimator of $\bm{\beta}$ as
\begin{align} 
    & \hat{\bm{\beta}}_\lambda=(\mathbf{X}^\top \mathbf{X}+\lambda\mathbf{I})^{-1}\mathbf{X}^\top \mathbf{y} \nonumber\\
    & \hat{\sigma}_\lambda^2 =\frac{1}{n}
    (\mathbf{y}-\mathbf{X}\hat{\bm{\beta}}_\lambda)^\top (\mathbf{y}-\mathbf{X}\hat{\bm{\beta}}_\lambda) \label{eq:ridge_estimators}\\
    & \hat{\rm{V}}(\hat{\bm{\beta}}_\lambda) =
	\hat{\sigma}_\lambda^2 (\mathbf{X}^\top \mathbf{X}+\lambda \mathbf{I})^{-1}\mathbf{X}^\top\mathbf{X}(\mathbf{X}^\top \mathbf{X}+\lambda \mathbf{I})^{-1} \nonumber
\end{align}

\section{Methodology} \label{sec:method}
The main goal is to find approaches that are able to overcome the memory barrier issue of closed-form solutions and make them as widely applicable as the iterative methods in big data. 
In pursuit of this goal, the array of sufficient statistics (SS) is formally defined. And SS based multiple learning algorithms are proposed in this section.


\subsection{Sufficient Statistics Array}
SS array is an array of sufficient statistics used to calculate the estimators of the models and the loglikelihood function (or SSE cost function) without a second visit to the dataset. It's inspired by the computation-wise row-independent of
 the equivalent forms of \eqref{eq:linreg_estimators} of linear regression \cite{zhang2017box,zhang2017exact}.
 
Rewritting $\hat{\bm{\beta}}$ from \eqref{eq:linreg_estimators} in \eqref{eq:linreg_estimators_sum}, $\sum_{i=1}^n\mathbf{x}_i^\top\mathbf{x}_i$ is computation-wise row independent, i.e., for any two observations $\mathbf{x}_{i1}$ and $\mathbf{x}_{i2}$ , calculating the summation of $\mathbf{x}_{i1}^\top\mathbf{x}_{i1}$ doesn't depend on $\mathbf{x}_{i2}$. Likewise, $\sum_{i=1}^n \mathbf{x}_iy_i$ and $\sum_{i=1}^{n}{y_i^2}$ are computation-wise row-independent as well.
\begin{flalign}
\begin{split} \label{eq:linreg_estimators_sum}
    & \hat{\bm{\beta}} =
    \left( \sum_{i=1}^{n}\mathbf{x}_i^\top\mathbf{x}_i \right)^{-1}
    \left( \sum_{i=1}^{n}\mathbf{x}_iy_i \right) \\
\end{split}
\end{flalign}

Inspired by this thought, the array of SS is formally defined as follows.
\begin{definition}
Sufficient statistics (SS) array is an array of sufficient statistics that computed at per row level or per mini batch level from the dataset and can be used to compute the estimators of the model coefficients $\hat{\bm{\beta}}$, the variance $\sigma^2$, the variance-covariance matrices $\hat{\rm{V}}(\hat{\bm{\beta}})$ and the loglikelihood (or SSE cost function) without revisiting the dataset.
\end{definition}

\subsection{Linear Regression}
Based on \eqref{eq:linreg_llh} and \eqref{eq:linreg_estimators}, $\mathcal{S}_{lr}$ is presented as an array of SS for linear regression. 
\begin{align} \label{eq:linreg_SS}
    \mathcal{S}_{lr} = (s_{yy}, \mathbf{s}_{xy}, \mathbf{S}_{xx}) 
    = (\sum_{i=1}^n s_{yy,i}, \sum_{i=1}^n \mathbf{s}_{xy,i}, \mathbf{S}_{xx,i})
\end{align}
where $s_{yy,i}=y_i^2$ is a scalar, $\mathbf{s}_{xy,i}=\mathbf{x}_iy_i$ is a $p \times 1$ vector, and $\mathbf{S}_{xx,i}=\mathbf{x}_i\mathbf{x}_i^\top$ is a $p \times p$ matrix.

By \eqref{eq:linreg_SS}, we obtain the following
\begin{align}
\begin{split} \label{eq:linreg_estimators_SS}
& \bm{\hat{\beta}}=\mathbf{S}_{xx}^{-1}\mathbf{s}_{xy} \\
    & \hat{\sigma}^2 = \frac{1}{n}    (s_{yy}-\mathbf{s}_{xy}^\top\mathbf{S}_{xx}^{-1}\mathbf{s}_{xy}) \\
    & \hat{\rm{V}}(\hat{\bm{\beta}})=\hat{\sigma}^2\mathbf{S}_{xx}^{-1} 
\end{split}
\end{align}

\begin{theorem}
    $\mathcal{S}_{lr}$ is an array of SS for linear regression to derive $\hat{\bm{\beta}}$, $\hat{\sigma}^2$, $\hat{\rm{V}}(\hat{\bm{\beta}})$ and $\mathcal{L}_{lr}(\bm{\beta}, \sigma^2)$. 
\end{theorem}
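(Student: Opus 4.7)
The plan is to show that each of the four target quantities, namely $\hat{\bm{\beta}}$, $\hat{\sigma}^2$, $\hat{\mathrm{V}}(\hat{\bm{\beta}})$, and $\mathcal{L}_{lr}(\bm{\beta}, \sigma^2)$, can be written as an explicit function of the three summary statistics $(s_{yy}, \mathbf{s}_{xy}, \mathbf{S}_{xx})$ alone, with no further reference to the raw data $\{(\mathbf{x}_i, y_i)\}_{i=1}^n$ beyond $n$ itself. The whole argument rests on a single elementary identity: the outer-product / column-by-row decompositions
\[
\mathbf{X}^\top \mathbf{X} = \sum_{i=1}^n \mathbf{x}_i \mathbf{x}_i^\top, \qquad \mathbf{X}^\top \mathbf{y} = \sum_{i=1}^n \mathbf{x}_i y_i, \qquad \mathbf{y}^\top \mathbf{y} = \sum_{i=1}^n y_i^2,
\]
which by the definition of $\mathcal{S}_{lr}$ in \eqref{eq:linreg_SS} give $\mathbf{S}_{xx} = \mathbf{X}^\top \mathbf{X}$, $\mathbf{s}_{xy} = \mathbf{X}^\top \mathbf{y}$, and $s_{yy} = \mathbf{y}^\top \mathbf{y}$.

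With these three substitutions, the coefficient formula $\hat{\bm{\beta}} = (\mathbf{X}^\top \mathbf{X})^{-1} \mathbf{X}^\top \mathbf{y}$ from \eqref{eq:linreg_estimators} becomes $\hat{\bm{\beta}} = \mathbf{S}_{xx}^{-1}\mathbf{s}_{xy}$ at once. Next I would expand the residual sum of squares,
\[
(\mathbf{y}-\mathbf{X}\hat{\bm{\beta}})^\top(\mathbf{y}-\mathbf{X}\hat{\bm{\beta}}) = \mathbf{y}^\top\mathbf{y} - 2\hat{\bm{\beta}}^\top\mathbf{X}^\top\mathbf{y} + \hat{\bm{\beta}}^\top\mathbf{X}^\top\mathbf{X}\hat{\bm{\beta}},
\]
and invoke the normal equation $\mathbf{X}^\top\mathbf{X}\hat{\bm{\beta}} = \mathbf{X}^\top\mathbf{y}$ to collapse the last two terms to $-\hat{\bm{\beta}}^\top\mathbf{X}^\top\mathbf{y}$. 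After the SS substitution this yields $\hat{\sigma}^2 = (s_{yy} - \mathbf{s}_{xy}^\top\mathbf{S}_{xx}^{-1}\mathbf{s}_{xy})/n$, matching \eqref{eq:linreg_estimators_SS}. The variance-covariance formula $\hat{\mathrm{V}}(\hat{\bm{\beta}}) = \hat{\sigma}^2\mathbf{S}_{xx}^{-1}$ is then immediate by the same substitution.

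Finally, for the loglikelihood in \eqref{eq:linreg_llh}, I would note that the only data-dependent piece is the quadratic form $\|\mathbf{y}-\mathbf{X}\bm{\beta}\|_2^2 = \mathbf{y}^\top\mathbf{y} - 2\bm{\beta}^\top\mathbf{X}^\top\mathbf{y} + \bm{\beta}^\top\mathbf{X}^\top\mathbf{X}\bm{\beta}$, which becomes $s_{yy} - 2\bm{\beta}^\top\mathbf{s}_{xy} + \bm{\beta}^\top\mathbf{S}_{xx}\bm{\beta}$, a function of $\mathcal{S}_{lr}$, $n$, and the argument $(\bm{\beta},\sigma^2)$ only.

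Honestly, there is no serious obstacle here: the theorem is a direct unpacking of the definitions, and the only thing worth emphasizing in the writeup is the precise place where the normal equation is used to simplify the residual sum of squares, since that is where an implicit dependence on $\mathbf{X}$ and $\mathbf{y}$ disappears and the representation in terms of $\mathcal{S}_{lr}$ becomes closed. What makes the statement noteworthy is less the proof itself than the conclusion that $\mathcal{S}_{lr}$ has fixed size $O(p^2)$ independent of $n$ and is additively decomposable across rows, which is exactly what enables the single-pass, memory-oblivious computation advertised in the introduction.
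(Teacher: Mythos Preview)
Your proposal is correct and follows the same approach as the paper: expand the quadratic forms and substitute $\mathbf{X}^\top\mathbf{X}=\mathbf{S}_{xx}$, $\mathbf{X}^\top\mathbf{y}=\mathbf{s}_{xy}$, $\mathbf{y}^\top\mathbf{y}=s_{yy}$. The paper's own proof is in fact briefer than yours---it only writes out the loglikelihood expansion, taking the estimator identities \eqref{eq:linreg_estimators_SS} as already given---so your explicit derivation of $\hat{\sigma}^2$ via the normal equation is more complete than what the paper records.
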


\begin{proof}
From \eqref{eq:linreg_SS}, the loglikelihood can be expressed as a functin of $\mathcal{S}_{lr}$.
\begin{align}
\begin{split} \label{eq:linreg_llh_SS}
    & \mathcal{L}_{lr}(\bm{\beta}, \sigma^2) = -\frac{n}{2}\log(2\pi\sigma^2) \\ & -\frac{1}{2\sigma^2}(s_{yy}-2\mathbf{s}_{xy}^\top\bm{\beta}+\bm{\beta}^\top \mathbf{S}_{xx} \bm{\beta})
\end{split}
\end{align}
which only depends on SS for linear regression.
\end{proof}


To accelerate the computation, row-by-row calculation could be optimized by batch-by-batch computation, i.e. $\sum_{i=1}^n y_i^2$, $\sum_{i=1}^n\mathbf{x}_iy_i$ and $\sum_{i=1}^n\mathbf{x}_i^\top\mathbf{x}_i$ could be written in the form of batch:
\begin{align}
\begin{split} \label{eq:linreg_estimators_SS_batch}
    & s_{yy} = \sum_{k=1}^m s_{yy}^{(k)} = \sum_{k=1}^m \mathbf{y}_k^\top \mathbf{y}_k \\
    & \mathbf{s}_{xy} = \sum_{k=1}^m \mathbf{s}_{xy}^{(k)} =\sum_{k=1}^m \mathbf{X}_k^\top \mathbf{y}_k \\
    & \mathbf{S}_{xx} = \sum_{k=1}^m \mathbf{S}_{xx}^{(k)} = \sum_{k=1}^m \mathbf{X}_k^\top \mathbf{X}_k 
\end{split}
\end{align}
where $m$ denotes the total number of batches, $s_{yy}^{(k)}$, $\mathbf{s}_{xy}^{(k)}$ and $\mathbf{S}_{xx}^{(k)}$ denotes SS array in batch $k$. $\mathbf{y}_k$ is a $m_k \times 1$ vector, $\mathbf{X}_k$ is a $m_k \times m_k$ array and $m_k$ is the batch size for batch $k$. The multiple learning approach for linear regression algorithm by mini-batch is shown in \algref{alg:linreg_SS_batch}.
\begin{algorithm}[tb]
\caption{Linear Regression with Sufficient Statistics}
\label{alg:linreg_SS_batch}
\begin{flushleft}
\textbf{Input}: batch-by-batch of the entire dataset\\
\textbf{Output}: $\hat{\bm{\beta}}$, $\hat{\sigma}^2$ and $\hat{\rm{V}}(\hat{\bm{\beta}})$
\end{flushleft}
\begin{algorithmic}[1] 
    \STATE $s_{yy}=0, \mathbf{s}_{xy}=\mathbf{0}, \mathbf{S}_{xx}=\mathbf{0}$
    \FOR{$k \gets 1$ to $m$}  
        \STATE Compute $s_{yy}^{(k)}, \mathbf{s}_{xy}^{(k)}, \mathbf{S}_{xx}^{(k)}$ based on \eqref{eq:linreg_estimators_SS_batch}
        \STATE $s_{yy}\pluseq s_{yy}^{(k)},\mathbf{s}_{xy} \pluseq \mathbf{s}_{xy}^{(k)},\mathbf{S}_{xx} \pluseq \mathbf{S}_{xx}^{(k)}$
    \ENDFOR
    \IF {$\mathbf{S}_{xx}$ is singular}
        \STATE Compute $\mathbf{S}_{xx}^{-1}$ using generalized inverse 
    \ELSE
        \STATE  Compute $\mathbf{S}_{xx}^{-1}$
    \ENDIF
    \STATE Compute $\hat{\bm{\beta}}$, $\hat{\sigma}^2$ and $\hat{\rm{V}}(\hat{\bm{\beta}})$ based on \eqref{eq:linreg_estimators_SS}
    \STATE \textbf{return} $\hat{\bm{\beta}}$, $\hat{\sigma}^2$ and $\hat{\rm{V}}(\hat{\bm{\beta}})$
\end{algorithmic}
\end{algorithm}

\subsection{Weighted Linear Regression}
Weighted linear regression uses weights to adjust the importance of different observations. Therefore, the SS array $\mathbf{S}_{wls}$ for weighted linear regression is slightly different.
\begin{align} \label{eq:wlreg_SS}
\begin{split}
    \mathcal{S}_{wlr} & = (s_{wyy}, \mathbf{s}_{wxy}, \mathbf{S}_{wxx}) \\
    & = (\sum_{i=1}^n s_{wyy,i}, \sum_{i=1}^n \mathbf{s}_{wxy,i}, \mathbf{S}_{wxx,i})
\end{split}
\end{align}
where $s_{wyy,i}=w_iy_i^2$ is scalar, $\mathbf{s}_{wxy,i}=\mathbf{x}_iw_iy_i$ is a $p \times 1$ vector, and $\mathbf{S}_{wxx,i}=w_i\mathbf{x}_i\mathbf{x}_i^\top$ is a $p \times p$ matrix.

The estimators are re-expressed as follows:
\begin{align}
\begin{split} \label{eq:wlreg_estimators_SS}
    & \bm{\hat{\beta}}_w=\mathbf{S}_{wxx}^{-1}\mathbf{s}_{wxy} \\
    & \hat{\sigma}_w^2 = \frac{1}{n}    (s_{wyy}-\mathbf{s}_{wxy}^\top\mathbf{S}_{wxx}^{-1}\mathbf{s}_{wxy}) \\
    & \hat{\rm{V}}(\hat{\bm{\beta}}_w)=\hat{\sigma}_w^2\mathbf{S}_{wxx}^{-1}
\end{split}
\end{align}

\begin{theorem}
    $\mathcal{S}_{wlr}$ is an array of SS for weighted linear regression to derive the estimators of $\hat{\bm{\beta}}_w$, $\sigma_w^2$, $\hat{\rm{V}}(\hat{\bm{\beta}}_w)$ and $SSE_{wls}(\bm{\beta}_w)$. 
\end{theorem}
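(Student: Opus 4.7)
The plan is to mirror the structure of the proof of Theorem 1, since weighted linear regression differs from ordinary linear regression only by the insertion of the diagonal weight matrix $\mathbf{W}$. The key observations are that (i) each summand $s_{wyy,i}$, $\mathbf{s}_{wxy,i}$, $\mathbf{S}_{wxx,i}$ depends only on row $i$ of the data, so $\mathcal{S}_{wlr}$ is genuinely row-decomposable and can be accumulated per row or per mini-batch; and (ii) the two quantities a user needs to recover without a second pass through the data, namely the estimator triple $(\hat{\bm{\beta}}_w,\hat{\sigma}_w^2,\hat{\rm{V}}(\hat{\bm{\beta}}_w))$ and the cost function $SSE_{wlr}(\bm{\beta}_w)$, can both be written as closed-form functions of $(s_{wyy},\mathbf{s}_{wxy},\mathbf{S}_{wxx})$.

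First I would identify $\mathbf{X}^\top\mathbf{W}\mathbf{X}$ with $\mathbf{S}_{wxx}$ and $\mathbf{X}^\top\mathbf{W}\mathbf{y}$ with $\mathbf{s}_{wxy}$ by using the diagonal structure of $\mathbf{W}$, so that $\mathbf{X}^\top\mathbf{W}\mathbf{X}=\sum_{i=1}^n w_i\mathbf{x}_i\mathbf{x}_i^\top$ and $\mathbf{X}^\top\mathbf{W}\mathbf{y}=\sum_{i=1}^n w_i\mathbf{x}_i y_i$. Substituting these into the closed-form expressions \eqref{eq:wlreg_estimators} immediately yields the formulas \eqref{eq:wlreg_estimators_SS} for $\hat{\bm{\beta}}_w$, $\hat{\sigma}_w^2$, and $\hat{\rm{V}}(\hat{\bm{\beta}}_w)$, noting that the residual sum of squares $(\mathbf{y}-\mathbf{X}\hat{\bm{\beta}}_w)^\top \mathbf{W}(\mathbf{y}-\mathbf{X}\hat{\bm{\beta}}_w)$ simplifies to $s_{wyy}-\mathbf{s}_{wxy}^\top\mathbf{S}_{wxx}^{-1}\mathbf{s}_{wxy}$ after plugging in $\hat{\bm{\beta}}_w=\mathbf{S}_{wxx}^{-1}\mathbf{s}_{wxy}$ and cancelling.

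Next I would handle the SSE cost function by expanding the weighted quadratic form
\begin{equation*}
SSE_{wlr}(\bm{\beta}_w)
=(\mathbf{y}-\mathbf{X}\bm{\beta}_w)^\top\mathbf{W}(\mathbf{y}-\mathbf{X}\bm{\beta}_w)
= s_{wyy}-2\mathbf{s}_{wxy}^\top\bm{\beta}_w+\bm{\beta}_w^\top\mathbf{S}_{wxx}\bm{\beta}_w,
\end{equation*}
which exhibits $SSE_{wlr}$ as a function of $\mathcal{S}_{wlr}$ alone, paralleling equation \eqref{eq:linreg_llh_SS}. At this point both requirements in the definition of an SS array are verified and the proof is complete.

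I do not anticipate a genuine obstacle: because $\mathbf{W}$ is diagonal, the reduction to per-row sums is immediate, and the algebraic simplification of the residual sum of squares under the substitution $\hat{\bm{\beta}}_w=\mathbf{S}_{wxx}^{-1}\mathbf{s}_{wxy}$ is mechanical. The only place worth care is ensuring that $\mathbf{S}_{wxx}$ is invertible (or, when singular, interpreting $\mathbf{S}_{wxx}^{-1}$ via a generalized inverse as in Algorithm \ref{alg:linreg_SS_batch}), so that the estimator formulas are well-defined; this is a modeling assumption inherited directly from the unweighted case and does not require separate argument.
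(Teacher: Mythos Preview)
Your proposal is correct and follows essentially the same approach as the paper: the paper's proof simply expands $SSE_{wls}(\bm{\beta}_w)=s_{wyy}-2\mathbf{s}_{wxy}^\top\bm{\beta}_w+\bm{\beta}_w^\top \mathbf{S}_{wxx}\bm{\beta}_w$ and notes that this depends only on $\mathcal{S}_{wlr}$. Your version is more thorough in that you also verify the estimator formulas \eqref{eq:wlreg_estimators_SS} and address invertibility, but the core argument is identical.
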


\begin{proof}
From \eqref{eq:wlreg_SS}, \eqref{eq:wlreg_sse} can be expressed as a function of the SS array
\begin{align} \label{eq:wlreg_sse_SS}
    SSE_{wls}(\bm{\beta}_w)=s_{wyy}-2\mathbf{s}_{wxy}^\top\bm{\beta}_w+\bm{\beta}_w^\top \mathbf{S}_{wxx} \bm{\beta}_w
\end{align}
which only depends on SS for weighted linear regression.
\end{proof}


Similar to multiple learning approach for linear regression algorithm, calculating SS batch by batch is also feasible.
\begin{align}
\begin{split} \label{eq:wlreg_estimators_SS_batch}
    & s_{wyy} = \sum_{k=1}^m s_{wyy}^{(k)} = \sum_{k=1}^m \mathbf{y}_k^\top \mathbf{W}_k \mathbf{y}_k \\
    & \mathbf{s}_{wxy} = \sum_{k=1}^m \mathbf{s}_{wxy}^{(k)} =\sum_{k=1}^m \mathbf{X}_k^\top \mathbf{W}_k \mathbf{y}_k \\
    & \mathbf{S}_{wxx} = \sum_{k=1}^m \mathbf{S}_{wxx}^{(k)} = \sum_{k=1}^m \mathbf{X}_k^\top \mathbf{W}_k \mathbf{X}_k 
\end{split}
\end{align}
where $\mathbf{W}_k$ is a $m_k \times m_k$ diagonal weight matrix in batch $k$.

The multiple learning approach for weighted linear regressoin is shown in \algref{alg:wlreg_SS_batch}.

\begin{algorithm}[tb]
\caption{Weighted Linear Regression with Sufficient Statistics}
\label{alg:wlreg_SS_batch}
\begin{flushleft}
\textbf{Input}: batch by batch of the entire dataset\\
\textbf{Output}: $\hat{\bm{\beta}}$, $\hat{\sigma}^2$ and $\hat{\rm{V}}(\hat{\bm{\beta}})$
\end{flushleft}
\begin{algorithmic}[1] 
    \STATE $s_{wyy}=0, \mathbf{s}_{wxy}=\mathbf{0}, \mathbf{S}_{wxx}=\mathbf{0}$
    \FOR{$k \gets 1$ to $m$}  
        \STATE Compute $s_{wyy}^{(k)}, \mathbf{s}_{wxy}^{(k)}, \mathbf{S}_{wxx}^{(k)}$ based on \eqref{eq:wlreg_estimators_SS_batch}
        \STATE $s_{wyy}\pluseq s_{wyy}^{(k)},\mathbf{s}_{wxy} \pluseq \mathbf{s}_{wxy}^{(k)},\mathbf{S}_{wxx} \pluseq \mathbf{S}_{wxx}^{(k)}$
    \ENDFOR
    \IF {$\mathbf{S}_{wxx}$ is singular}
        \STATE Compute $\mathbf{S}_{wxx}^{-1}$ using generalized inverse 
    \ELSE
        \STATE  Compute $\mathbf{S}_{wxx}^{-1}$
    \ENDIF
    \STATE Compute $\hat{\bm{\beta}}_w$, $\hat{\sigma}_w^2$ and $\hat{\rm{V}}(\hat{\bm{\beta}}_w)$ based on \eqref{eq:wlreg_estimators_SS}
    \STATE \textbf{return} $\hat{\bm{\beta}}_w$, $\hat{\sigma}_w^2$ and $\hat{\rm{V}}(\hat{\bm{\beta}}_w)$
\end{algorithmic}
\end{algorithm}

\subsection{Box-Cox Regression}
Box-Cox regression requires a power transformation on the response variable. Commonly, a set $C$ of power parameters are applied. And the $c$ maximizes the \eqref{eq:bcreg_llh} is picked as the best parameter. As the profile loglikelihood is required for parameter picking, $(\mathbf{c-1})^\top \log \mathbf{y}$ is necessarily needed. 

The arrays of SS for Box-Cox regression is shown in \eqref{eq:bcreg_SS}. For every $c \in C$,
\begin{align}
\begin{split} \label{eq:bcreg_SS}
    & {\mathcal{S}}_{c, bc} = (s_{c,yy}, s_{logy}, \mathbf{s}_{c,xy}, \mathbf{S}_{xx}) \\
    & = (\sum_{i=1}^n s_{c,yy,i}, \sum_{i=1}^n s_{logy,i}, \sum_{i=1}^n \mathbf{s}_{c,xy,i}, \mathbf{S}_{xx,i})
\end{split}
\end{align}
where $s_{c,yy,i}=\left(y_i^{(c)}\right)^2$ and $s_{logy,i}=\log y_i$ are scalars, $\mathbf{s}_{c,xy,i}=\mathbf{x}_iy_i^{(c)}$ is a $p \times 1$ vector and $\mathbf{S}_{xx,i}=\mathbf{x}_i\mathbf{x}_i^\top$ is a $p \times p$ matrix. Notably, $\mathbf{S}_{xx}$ is sharable to all models.

Thus, for every $c \in C$, 
\begin{align}
\begin{split} \label{eq:bcreg_estimators_SS}
    & \bm{\hat{\beta}}_c=\mathbf{S}_{xx}^{-1}\mathbf{s}_{c,xy} \\
    & \hat{\sigma}_c^2 = \frac{1}{n}    (s_{c,yy}-\mathbf{s}_{c,xy}^\top\mathbf{S}_{xx}^{-1}\mathbf{s}_{c,xy}) \\
    & \hat{\rm{V}}(\hat{\bm{\beta}}_c)=\hat{\sigma}_c^2\mathbf{S}_{xx}^{-1} \\
\end{split}
\end{align}

\begin{theorem}
    For any $c \in C$, the corresponding $\mathcal{S}_{c,bc}$ is an array of SS for Box-Cox regressoin, which can be used to compute $\hat{\bm{\beta}}_c$, $\hat{\sigma}_c^2$, $\hat{\rm{V}}(\hat{\bm{\beta}}_c)$ and $\mathcal{L}_{bc}(c,\bm{\beta}_c,\sigma_c^2)$. 
\end{theorem}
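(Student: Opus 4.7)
The plan is to mirror the two preceding proofs, which reduced to showing that the loglikelihood (or SSE) decomposes as a function of the SS array; the estimator formulas in \eqref{eq:bcreg_estimators_SS} are then an immediate rewriting of \eqref{eq:bcreg_estimators} once the inner products $\mathbf{X}^\top \mathbf{X}$ and $\mathbf{X}^\top \mathbf{y}^{(c)}$ and the squared norm $(\mathbf{y}^{(c)})^\top \mathbf{y}^{(c)}$ are identified with $\mathbf{S}_{xx}$, $\mathbf{s}_{c,xy}$ and $s_{c,yy}$ respectively. So the real content is to exhibit $\mathcal{L}_{bc}(c,\bm{\beta}_c,\sigma_c^2)$ as a closed expression in $\mathcal{S}_{c,bc}$.

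First I would expand the quadratic term in \eqref{eq:bcreg_llh} exactly as in the linear regression proof:
\begin{align*}
(\mathbf{y}^{(c)}-\mathbf{X}\bm{\beta}_c)^\top (\mathbf{y}^{(c)}-\mathbf{X}\bm{\beta}_c)
= s_{c,yy} - 2\,\mathbf{s}_{c,xy}^\top\bm{\beta}_c + \bm{\beta}_c^\top \mathbf{S}_{xx}\,\bm{\beta}_c,
\end{align*}
using $\sum_i \bigl(y_i^{(c)}\bigr)^2 = s_{c,yy}$, $\sum_i \mathbf{x}_i y_i^{(c)} = \mathbf{s}_{c,xy}$ and $\sum_i \mathbf{x}_i \mathbf{x}_i^\top = \mathbf{S}_{xx}$. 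Next I would handle the Jacobian correction $(\mathbf{c-1})^\top\log\mathbf{y}$: since $c$ is a scalar power parameter, this collapses to $(c-1)\sum_{i=1}^n \log y_i = (c-1)\,s_{logy}$, which is precisely the fourth coordinate of $\mathcal{S}_{c,bc}$.

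Putting the pieces together yields
\begin{align*}
\mathcal{L}_{bc}(c,\bm{\beta}_c,\sigma_c^2)
&= -\tfrac{n}{2}\log(2\pi) - \tfrac{n}{2}\log\sigma_c^2 \\
&\quad - \tfrac{1}{2\sigma_c^2}\bigl(s_{c,yy} - 2\,\mathbf{s}_{c,xy}^\top\bm{\beta}_c + \bm{\beta}_c^\top \mathbf{S}_{xx}\,\bm{\beta}_c\bigr) - (c-1)\,s_{logy},
\end{align*}
which depends on the data only through $\mathcal{S}_{c,bc}$. Combined with the already-established \eqref{eq:bcreg_estimators_SS}, this shows $\mathcal{S}_{c,bc}$ suffices to compute all four quantities.

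I do not expect a genuine obstacle here: the proof is essentially the linear-regression argument applied to the transformed response $\mathbf{y}^{(c)}$, plus the observation that the Jacobian term is linear in $s_{logy}$. The only subtlety worth flagging is the bold notation $\mathbf{c-1}$ in \eqref{eq:bcreg_llh}, which must be read as the scalar $c-1$ multiplying the sum of log-responses; once this is interpreted consistently, the decomposition is a direct algebraic expansion and no new idea beyond Theorem 1 is required.
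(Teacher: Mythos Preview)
Your proposal is correct and follows essentially the same route as the paper: expand the quadratic form in \eqref{eq:bcreg_llh} into $s_{c,yy} - 2\,\mathbf{s}_{c,xy}^\top\bm{\beta}_c + \bm{\beta}_c^\top \mathbf{S}_{xx}\,\bm{\beta}_c$ and rewrite the Jacobian term via $s_{logy}$, obtaining the loglikelihood purely as a function of $\mathcal{S}_{c,bc}$. Your version is in fact more explicit than the paper's one-line proof (and you even catch the $\mathbf{c-1}$ notational wrinkle), but the underlying argument is identical.
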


\begin{proof} By \eqref{eq:bcreg_estimators_SS}, \eqref{eq:bcreg_llh} becomes
\begin{align} \label{eq:bcreg_llh_SS}
\begin{split}
    & \mathcal{L}_{bc}(\bm{\beta}_c, \sigma^2_c) = -\frac{n}{2}\log(2\pi\sigma_c^2) \\ & -\frac{1}{2\sigma_c^2}(s_{c,yy}-2\mathbf{s}_{c,xy}^\top\bm{\beta}_c+\bm{\beta}_c^\top \mathbf{S}_{c,xx} \bm{\beta}_c)+(c-1)s_{logy}
\end{split}
\end{align}
which only depends on SS for Box-Cox linear regression.
\end{proof}

Batched version of SS for any $c\in C$ is shown in \eqref{eq:bcreg_estimators_SS_batch}. 

\begin{align}
\begin{split} \label{eq:bcreg_estimators_SS_batch}
    & s_{c,yy} = \sum_{k=1}^m s_{c,yy}^{(k)} = \sum_{k=1}^m (\mathbf{y}_k^{(c)})^\top \mathbf{y}_k \\
    & \mathbf{s}_{c,xy} = \sum_{k=1}^m \mathbf{s}_{c,xy}^{(k)} =\sum_{k=1}^m \mathbf{X}_k^\top \mathbf{y}_k^{(c)} \\
    & \mathbf{S}_{xx} = \sum_{k=1}^m \mathbf{S}_{xx}^{(k)} = \sum_{k=1}^m \mathbf{X}_k^\top \mathbf{X}_k 
\end{split}
\end{align}
where $\mathbf{y}^{(c)}_k$ is a $m_k \times 1$ vector in batch $k$.

The SS-based Box-Cox regression algorithm by mini-batch is presented in \algref{alg:bcreg_SS_batch}.

\begin{algorithm}[tb]
\caption{Box-Cox Regression with Sufficient Statistics}
\label{alg:bcreg_SS_batch}
\begin{flushleft}
\textbf{Input}: batch by batch of the entire dataset\\
\textbf{Output}: $\hat{\bm{\beta}}_{best}$, $\hat{\sigma}_{best}^2$ and $\hat{\rm{V}}(\hat{\bm{\beta}}_{best})$
\end{flushleft}
\begin{algorithmic}[1] 
    \STATE$\mathbf{S}_{xx}=\mathbf{0}$
    \FOR{$c \in C$}
        \STATE $s_{c,yy}=0, \mathbf{s}_{c,xy}=\mathbf{0}$
    \ENDFOR
    \FOR{$k \gets 1$ to $m$}
        \STATE Compute $\mathbf{S}_{xx}$ based on \eqref{eq:bcreg_estimators_SS_batch}
        \STATE $\mathbf{S}_{xx} \pluseq \mathbf{S}_{xx}^{(k)}$
        \FOR{$c \in C$}
             \STATE Compute $s_{c,yy}^{(k)}$ and $\mathbf{s}_{c,xy}^{(k)}$ based on \eqref{eq:bcreg_estimators_SS_batch}
             \STATE $s_{c,yy} \pluseq s_{c,yy}^{(k)}, \mathbf{s}_{c,xy} \pluseq \mathbf{s}_{c,xy}^{(k)}$
        \ENDFOR
    \ENDFOR
    \IF {$\mathbf{S}_{xx}$ is singular}
        \STATE Compute $\mathbf{S}_{xx}^{-1}$ using generalized inverse 
    \ELSE
        \STATE  Compute $\mathbf{S}_{xx}^{-1}$
    \ENDIF
    \FOR{$c \in C$}
        \STATE Compute $\hat{\bm{\beta}}_c$, $\hat{\sigma}_c^2$ and $\hat{\rm{V}}(\hat{\bm{\beta}}_c)$ based on \eqref{eq:bcreg_estimators_SS}
        \STATE Compute $\mathcal{L}_{bc}$ based on \eqref{eq:bcreg_llh_SS}
    \ENDFOR
    \STATE \textbf{return} $\hat{\bm{\beta}}_{best}$, $\hat{\sigma}_{best}^2$ and $\hat{\rm{V}}(\hat{\bm{\beta}}_{best})$ based on $\mathcal{L}_{bc}$
\end{algorithmic}
\end{algorithm}

\subsection{Ridge Regression}
Although ridge regression requires a set $D$ of ridge parameters, the SS array is re-usable to all ridge parameters and could be borrowed directly from linear regression. 

Let $\mathcal{S}_{ridge} = \mathcal{S}_{lr}$, for every $\lambda \in D$, the corresponding estimators $\hat{\bm{\beta}}_\lambda$, $\hat{\sigma}_\lambda^2$, $\hat{\rm{V}}(\hat{\bm{\beta}_\lambda})$ and the SSE cost function are:
\begin{align}
\begin{split} \label{eq:ridge_estimators_SS}
    & \hat{\bm{\beta}}_\lambda = (\mathbf{S}_{xx}+\lambda\mathbf{I})^{-1}\mathbf{s}_{xy} \\
    & \hat{\sigma}_\lambda^2 = \frac{1}{n}    (s_{yy}-\mathbf{s}_{xy}^\top(\mathbf{S}_{xx}+\lambda\mathbf{I})^{-1}\mathbf{s}_{xy}) \\
    & \hat{\rm{V}}(\hat{\bm{\beta}}_\lambda)=\hat{\sigma}_\lambda^2(\mathbf{S}_{xx}+\lambda\mathbf{I})^{-1}\mathbf{S}_{xx}(\mathbf{S}_{xx}+\lambda\mathbf{I})
\end{split}
\end{align}

\begin{align}
    SSE_{ridge}(\lambda,\bm{\beta}_\lambda)=\norm{\mathbf{y}-\mathbf{X}\bm{\beta}_\lambda}_2^2 +n\lambda\norm{\bm{\beta}_\lambda}_{2}^2 \label{eq:ridge_sse_SS}
\end{align}
The best $\lambda$ is selected by the ridge trace method.

\begin{theorem}
    $\mathcal{S}_{ridge}$ is the SS array for ridge regression.
\end{theorem}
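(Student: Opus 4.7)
The plan is to mirror the structure of the proofs of the three preceding theorems: show that each of the four required objects---$\hat{\bm{\beta}}_\lambda$, $\hat{\sigma}_\lambda^2$, $\hat{\rm{V}}(\hat{\bm{\beta}}_\lambda)$ and $SSE_{ridge}(\lambda,\bm{\beta}_\lambda)$---is a function of the triple $(s_{yy},\mathbf{s}_{xy},\mathbf{S}_{xx})$ inherited from $\mathcal{S}_{lr}$, together with the tuning parameter $\lambda$. Once this reduction is in hand the definition of an SS array is met, since no second pass through the data is needed after $\mathcal{S}_{ridge}$ has been accumulated via the batch recursion already justified for linear regression in \eqref{eq:linreg_estimators_SS_batch}.

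First I would handle the point estimators. Substituting the elementary identities $\mathbf{X}^\top\mathbf{X}=\mathbf{S}_{xx}$ and $\mathbf{X}^\top\mathbf{y}=\mathbf{s}_{xy}$ directly into \eqref{eq:ridge_estimators} reproduces the $\hat{\bm{\beta}}_\lambda$ and $\hat{\rm{V}}(\hat{\bm{\beta}}_\lambda)$ formulas already displayed in \eqref{eq:ridge_estimators_SS}. For $\hat{\sigma}_\lambda^2$ I would expand the residual sum of squares as
\begin{align*}
(\mathbf{y}-\mathbf{X}\hat{\bm{\beta}}_\lambda)^\top(\mathbf{y}-\mathbf{X}\hat{\bm{\beta}}_\lambda)
= s_{yy} - 2\hat{\bm{\beta}}_\lambda^\top\mathbf{s}_{xy} + \hat{\bm{\beta}}_\lambda^\top\mathbf{S}_{xx}\hat{\bm{\beta}}_\lambda,
\end{align*}
and then substitute $\hat{\bm{\beta}}_\lambda=(\mathbf{S}_{xx}+\lambda\mathbf{I})^{-1}\mathbf{s}_{xy}$ to eliminate the remaining data dependence.

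For the cost function \eqref{eq:ridge_sse} the same expansion yields $\norm{\mathbf{y}-\mathbf{X}\bm{\beta}_\lambda}_2^2 = s_{yy} - 2\mathbf{s}_{xy}^\top\bm{\beta}_\lambda + \bm{\beta}_\lambda^\top\mathbf{S}_{xx}\bm{\beta}_\lambda$, while the penalty $n\lambda\norm{\bm{\beta}_\lambda}_2^2$ depends only on $\bm{\beta}_\lambda$, $\lambda$ and the scalar $n$. Adding the two gives $SSE_{ridge}(\lambda,\bm{\beta}_\lambda)$ as an expression built entirely from $\mathcal{S}_{ridge}$ and $(\lambda,\bm{\beta}_\lambda)$, completing the reduction.

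There is no real obstacle: the content is routine linear algebra and closely parallels the proof of Theorem~1. What I would be careful to emphasise at the end, however, is the substantive novelty of this theorem relative to its linear-regression analogue: the triple $\mathcal{S}_{ridge}$ itself does not depend on $\lambda$, so a single accumulation pass simultaneously supplies the closed-form ridge estimators and cost values for every $\lambda\in D$. That is the property on which the ridge-trace search used elsewhere in the paper relies, and it is worth stating explicitly once the formal equivalence has been verified.
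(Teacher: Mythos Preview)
Your proposal is correct and follows essentially the same approach as the paper. The paper's own proof is even terser: since the estimator formulas \eqref{eq:ridge_estimators_SS} are already displayed before the theorem, the proof just expands $SSE_{ridge}(\lambda,\bm{\beta}_\lambda)=s_{yy}-2\mathbf{s}_{xy}^\top\bm{\beta}_\lambda+\bm{\beta}_\lambda^\top\mathbf{S}_{xx}\bm{\beta}_\lambda+n\lambda\bm{\beta}_\lambda^\top\bm{\beta}_\lambda$ and notes this depends only on the SS array---your additional verification of the estimators and your closing remark about $\lambda$-independence are correct elaborations but not part of the paper's argument.
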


\begin{proof}
From \eqref{eq:ridge_estimators_SS}, \eqref{eq:ridge_sse} could be expressed as
\begin{align}
\begin{split}
    SSE_{ridge}(\lambda, & \bm{\beta}_\lambda) = \\ 
    & s_{yy}-2\mathbf{s}_{xy}^\top\bm{\beta}_\lambda+\bm{\beta}_\lambda^\top \mathbf{S}_{xx} \bm{\beta}_\lambda
    +n\lambda\bm{\beta}_\lambda^\top\bm{\beta}_\lambda
\end{split}
\end{align}
which only depends on SS for ridge regression.
\end{proof}



The batched version for SS is also identical to that of linear regression. The corresponding algorithm is presented in \algref{alg:ridge_SS_batch}.

\begin{algorithm}[tb]
\caption{Ridge Regression with Sufficient Statistics}
\label{alg:ridge_SS_batch}
\begin{flushleft}
\textbf{Input}: batch-by-batch of the entire dataset\\
\textbf{Output}: $\hat{\bm{\beta}}_{best}$, $\hat{\sigma}_{best}^2$ and $\hat{\rm{V}}(\hat{\bm{\beta}}_{best})$
\end{flushleft}
\begin{algorithmic}[1] 
    \STATE $s_{yy}=0, \mathbf{s}_{xy}=\mathbf{0}, \mathbf{S}_{xx}=\mathbf{0}$
    \FOR{$k \gets 1$ to $m$}  
        \STATE Compute $s_{yy}^k, \mathbf{s}_{xy}^k, \mathbf{S}_{xx}^k$ based on \eqref{eq:linreg_estimators_SS_batch}
        \STATE $s_{yy}\pluseq s_{yy}^k,\mathbf{s}_{xy} \pluseq \mathbf{s}_{xy}^k,\mathbf{S}_{xx} \pluseq \mathbf{S}_{xx}^k$
    \ENDFOR
    \FOR{$\lambda \in D$}
        \STATE  Compute $(\mathbf{S}_{xx}+\lambda\mathbf{I})^{-1}$
        \STATE Compute $\hat{\bm{\beta}}_\lambda$, $\hat{\sigma}_\lambda^2$ and $\hat{\rm{V}}(\hat{\bm{\beta}}_\lambda)$ based on \eqref{eq:ridge_estimators_SS}
        \STATE Compute $SSE_{ridge}$ and ridge trace
    \ENDFOR
    \STATE \textbf{return} $\hat{\bm{\beta}}_{best}$, $\hat{\sigma}_{best}^2$ and $\hat{\rm{V}}(\hat{\bm{\beta}}_{best})$ by ridge trace
\end{algorithmic}
\end{algorithm}
\section{Experiments}
To evaluate the proposed multiple learning algorithms, extensive experiments were conducted on a four-node Spark cluster. All the algorithms were implemented and tested on Spark. 


\begin{table}[htbp]
\centering
\caption{Configurations of Clusters}
\label{tb:cluster_config}
\begin{tabular}{|c|c|c|c|c|}
\hline
 & Master & Slave1 & Slave2 & Slave3 \\ \hline
CPU & i7-3770 & i7-3770 & Quad Q8400 & Quad Q9400 \\ \hline
Memory & 16GB & 16GB & 4GB & 4GB \\ \hline
Disk & 1TB & 1TB & 250GB & 250GB\\ \hline
\end{tabular}
\end{table}

\subsection{Setup}
The 4-node Spark cluster was configured with 1 master node and 3 worker nodes. 
The hardware specs of each of the four computers are shown in
\tbref{tb:cluster_config}.

\subsubsection{Data Simulation} \text{ } \\
To understand how massive datasets could impact the computing, we simulated 3 datasets with 0.6 million, 6 million and 60 million observations. The sizes of these datasets are approximately 1GB, 10GB, and 100GB. Generally, the 1GB and 10GB datasets can be loaded into memory easily. However, the 100GB dataset cannot be entirely loaded into the memory at one time. Each row of the data has 100 features for the experiments and all the features are of double type and continuous variables. In each response $y$, the corresponding error follows the normal distribution, i.e. $\bm{\varepsilon} \sim \mathcal{N}(0,\mathbf{I})$. Additionally, another 3 similar datasets are generated with all the responses set to be positive for proper Box-Cox regression.


\subsubsection{Experiment Design} \text{ } \\
We designed two experiments, one for time performance and the other for prediction quality, to compare the results between the multiple learning algorithms and the traditional ones on Spark.

\begin{table}[tb]
\centering
\caption{Time Performance Comparison. Spark represents the traditional approaches implemented by Apache Spark; SS 1 (SS 128) means the multiple learning approaches with batch size fixed to $1$ ($128$); $W=\mathbf{I}$ denotes the weights of the observations; $C=[-1.5 \text{ to } 1.5]$ represents the power parameters for Box-Cox regression from $-1.5$ to $1.5$ by an interval of $0.1$. Likewise, $D=[0 \text{ to } 0.9]$ are the ridge parameters from $0$ to $0.9$ by an interval of 0.1.}
\begin{tabular}{|cc|c|c|c|}
\hline
\multicolumn{2}{|c|}{\textbf{Model}} & \multicolumn{3}{c|}{\textbf{Time Used} (s)} \\ \cline{3-5}
                        &            & 1GB          & 10GB          & 100GB     \\\hline
LR                      & Spark      & 41.86        & 338.27        & 3266.16   \\\cline{2-5}
                        & SS 1       & 19.59        & 154.16        & 1505.64   \\\cline{2-5}
                        & SS 128     & 15.67        & 126.33        & 1267.96   \\\hline
Weighted LR             & Spark      & 42.23        & 339.54        & 3263.37   \\\cline{2-5}
$W=\mathbf{I}$          & SS 1       & 19.76        & 155.47        & 1528.75   \\\cline{2-5}
                        & SS 128     & 16.73        & 125.35        & 1289.54   \\\hline
Box-Cox                 & Spark      & 42.63        & 341.31        & 3264.33   \\\cline{2-5}
$C=[1]$                 & SS 1       & 19.16        & 156.41        & 1532.00   \\\cline{2-5}
                        & SS 128     & 15.19        & 122.49        & 1200.49   \\\hline
Box-Cox                 & Spark      & 431.29       & 3429.34       & \textbf{33701.51}  \\\cline{2-5}
$C=[-1.5 \text{ to } 1.5]$ & SS 1       & 19.87        & 160.13        & \textbf{1674.62}   \\\cline{2-5}
                        & SS 128     & 16.52        & 122.21        & \textbf{1206.17}   \\\hline
Ridge                   & Spark      & 41.58        & 328.48        & 3276.10   \\\cline{2-5}
$D=[0.1]$               & SS 1       & 19.87        & 152.47        & 1620.46   \\\cline{2-5}
                        & SS 128     & 16.10        & 127.92        & 1213.64   \\\hline
Ridge                   & Spark      & 423.63       & 3342.58       & \textbf{32688.28}  \\\cline{2-5}
$D=[0 \text{ to } 1.9]$ & SS 1       & 20.56        & 154.34        & \textbf{1651.33}   \\\cline{2-5}
                        & SS 128     & 16.80        & 125.63        & \textbf{1230.45}   \\\hline
                        
\end{tabular}
\label{tb:time_performance}
\end{table}

\begin{table}[tb]
\centering
\caption{MSE comparison}
\begin{tabular}{|cc|c|c|c|}
\hline
\multicolumn{2}{|c|}{\textbf{Model}} & \multicolumn{3}{c|}{\textbf{MSE} (s)} \\ \cline{3-5}
                        &            & 1GB          & 10GB          & 100GB     \\\hline
LR                      & Spark      & 1009520.77   & 993455.96     & 994025.56   \\\cline{2-5}
                        & SS         & 1009520.77   & 993455.96     & 994025.56   \\\hline
Weighted LR             & Spark      & 1009520.77   & 993455.96     & 994025.56   \\\cline{2-5}
$W=\mathbf{I}$          & SS         & 1009520.77   & 993455.96     & 994025.56   \\\hline
Box-Cox                 & Spark      & 1138432.54   & 1053491.23    & 1011557.43 \\\cline{2-5}
$C=[1]$                 & SS         & 1138432.54   & 1053491.23    & 1011557.43   \\\hline
Ridge                   & Spark      & 1009520.77   & 993455.96     & 994025.56   \\\cline{2-5}
$D=[0.1]$               & SS         & 1009520.77   & 993455.96     & 994025.56   \\\hline

\end{tabular}
\label{tb:mse}
\end{table}

\textbf{Experiment I: Time Performance Comparison}

The first experiment is to evaluate the time used for training different models. In this experiment, we compared the time performance of the multiple learning approaches with the traditional approaches. For the multiple learning approaches, we measured the time performance with regard to different batch sizes.

\textbf{Experiment II: Prediction Quality Comparison}

To experimentally support that our algorithms are as accurate as OLS algorithms with one pass through the datasets, we compared our algorithms with the traditional ones. In this experiment, we used 1GB, 10GB, and 100GB as the training sets and an additional 0.2GB, 2GB and 20GB data for testing (the testing sets are sampled in accordance with the same strategy for the generation of the training sets). To compare the prediction quality, Mean Squared Error (MSE), defined in equation \eqref{eq:mse}, is used as performance matircs. 
\begin{align} \label{eq:mse}
    MSE = \frac{\sum_i^n (y_i - \hat{y}_i)^2}{n}
\end{align}
where $y_i$ is the real value for observation $i$ and $\hat{y}_i$ is the predicted value, $n$ is the total number of observations.

\subsection{Results}
\tbref{tb:time_performance} and \tbref{tb:mse} show the results of two experiments.

\textbf{Experiment I: Time Performance Comparison}


Based on the results from \tbref{tb:time_performance}, the training time of our methods is twice efficient than that of the traditional ones on Spark. However, it's mainly ascribed to the embedded model summary functionality of Spark which requires a second visit to the dataset. Excluding this factor, the performance of our algorithms are nearly the same as the traditional ones on Spark.
But for model training with multiple parameters (e.g. model selection) from a set of candidate models, the proposed multiple learning has a great advantage. As is shown in \tbref{tb:time_performance}, the computation time needed to perform  traditional Box-Cox and ridge regression are affected drastically by the number of power parameters and ridge parameters. In contrast, the time overhead of the proposed multiple learning algorithms increased marginally by computing multiple parameters (or multiple models) simultaneously with multiple SS arrays. In \tbref{tb:time_performance}, our approaches are almost 20 times faster than the traditional approaches on Spark when computing 31 Box-Cox models or 20 Ridge regression models for the batch size $=1$. Speed-up factors can be further increased to around 27 if we increased the batch size to 128, \textit{i.e.} the sufficient statistical arrays are updated every 128 rows. 
Essentially, the training time saved with the multiple learning approach is proportional to the number of models needed to train.

It is also evident in \tbref{tb:time_performance} that bigger batch size also decreases the training time. The effect of batch size becomes more significant when the data size is larger. Comparing batch size of 128 against batch size of 1, the time reduction for data size of 1GB, 10GB and 100GB dataset are approximately 16\%, 22\%, and 30\%, respectively.  It can be inferred that more time is likely to be saved with bigger batch size for larger datasets.

From experiment I, we conclude that if model selection is needed for a given large scale dataset, the proposed multiple learning approach can significantly outperform the traditional approaches by reducing the disk I/Os to one time. This feature is highly desirable when multiple models need to be calculated and compared in real life applications.

\textbf{Experiment II: Prediction Quality Comparison}

\tbref{tb:mse} shows the prediction quality, using MSE, for the multiple learning approaches and the traditional ones given 1GB, 10GB, and 100GB datasets. As expected, the prediction accuracy of our approaches is identical to the built-in spark algorithms, providing experimental support to the proof presented in Section 3. Given the same accuracy, the proposed approaches outperformed the traditional approaches with with faster training time. And the larger the datasets, the more advantageous the proposed methods are.

\section{Conclusion}
In this paper, the multiple learning approaches for regression are proposed for big data. With only one pass through the dataset, a SS array is computed to derive the closed-form solutions for linear regression, weighted linear regression, Box-Cox regression and ridge regression. Theoretically and experimentally, it's proven that multiple learning is capable of overcoming the memory barrier issue. 

Furthermore, multiple SS arrays could be applied to obtain multiple models at once. Unlike other traditional methods that can only learn one model at a time, multiple learning outperforms the traditional techniques as far as time is concerned. Results also showed our approaches are extremely efficient when calculating multiple models as opposed to the traditional methods. Basically, the training time saved compared to the traditional methods is proportional to the number of models need to be investigated.

We believe this to be promising for big data for two main reasons: firstly, the coefficients of the models could be easily obtained as long as the SS arrays are calculated. Secondly, most of the models require a large amount of training and retraining, tuning and re-tuning to get better performance. While, multiple learning is able to solve or largely alleviate this time consuming problem.

Multiple learning approaches can be implemented on a single node as well as parallel computing frameworks, e.g. Spark. Due to time and resource constraints, our work is currently limited to closed-form solutions. For our further work, we would like to conduct more experiments over large scale datasets form real world applications and extend the multiple learning to models with no closed-form solutions.







{\footnotesize
\bibliographystyle{IEEEtran}
\bibliography{cites}
}

\end{document}